\theoremstyle{plain}
\newtheorem{thm}{Theorem} 
\theoremstyle{plain}
\newtheorem{lemma}{Lemma} 
\theoremstyle{plain}
\theoremstyle{plain}
\theoremstyle{definition}
\newtheorem{defn}{Definition} 
\theoremstyle{definition}
\title{\LARGE \bf Visual Generalized Coordinates $^{*}$}
\author{
    M Seetha Ramaiah$^{1}$ \and Amitabha Mukerjee$^{2}$ \and 
    Arindam Chakraborty$^{3}$ \and Sadbodh Sharma$^{4}$
    \thanks{$^{*}$ This work was supported by the Research-I foundation.}
    \thanks{$^{1}, ^{2}$ Department of Computer Science \& Engineering;
        $^{3}, ^{4}$ Center for Mechatronics; 
        Indian Institute of Technology Kanpur.
        {\small \{msram$^{1}$, amit$^{2}$, arindamc$^{3}$, sadbodh$^{4}$\}@iitk.ac.in}} 
}
\begin{document}

\maketitle
\thispagestyle{empty}
\pagestyle{empty}

\begin{abstract} 
An open problem in robotics is that of using vision to
identify a robot's own body and the world around it.  Many
models attempt to recover the traditional C-space
parameters. Instead, we propose an alternative C-space by
deriving generalized coordinates from $n$ images of the
robot. We show that the space of such images is bijective
to the motion space, so these images lie on a manifold
$\mathcal{V}$ homeomorphic to the canonical C-space. 
We now approximate this manifold as a set of
$n$ neighbourhood tangent spaces that result in a
graph, which we call the Visual Roadmap (VRM).
Given a new robot image, we perform
inverse kinematics visually by interpolating between nearby
images in the image space. Obstacles are projected onto the
VRM in $O(n)$ time by superimposition of images, leading to
the identification of collision poses. The edges joining
the free nodes can now be checked with a visual local
planner, and free-space motions computed in $O(nlogn)$
time. This enables us to plan paths in the image space for
a robot manipulator with unknown link geometries, DOF, 
kinematics, obstacles, and camera pose. We sketch the
proofs for the main theoretical ideas, identify the
assumptions, and demonstrate the approach for both
articulated and mobile robots.	We also investigate the
feasibility of the process by investigating various metrics
and image sampling densities, and demonstrate it on
simulated and real robots.
\end{abstract}

\section{Introduction}
\label{sec:intro}

\begin{figure}[t!]
 \begin{center}
    \begin{subfigure}[b]{0.48\columnwidth}
        \centering
        \includegraphics[width=\columnwidth]{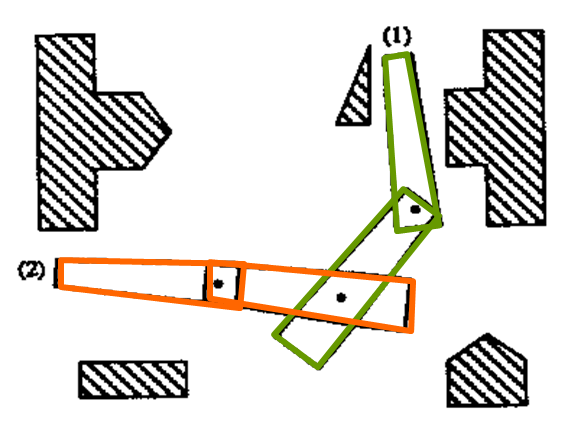}
        \caption{}
        \label{fig:1a}
    \end{subfigure}%
    \hspace{0.1mm}
    \begin{subfigure}[b]{0.48\columnwidth}
        \centering
        \includegraphics[width=\columnwidth]{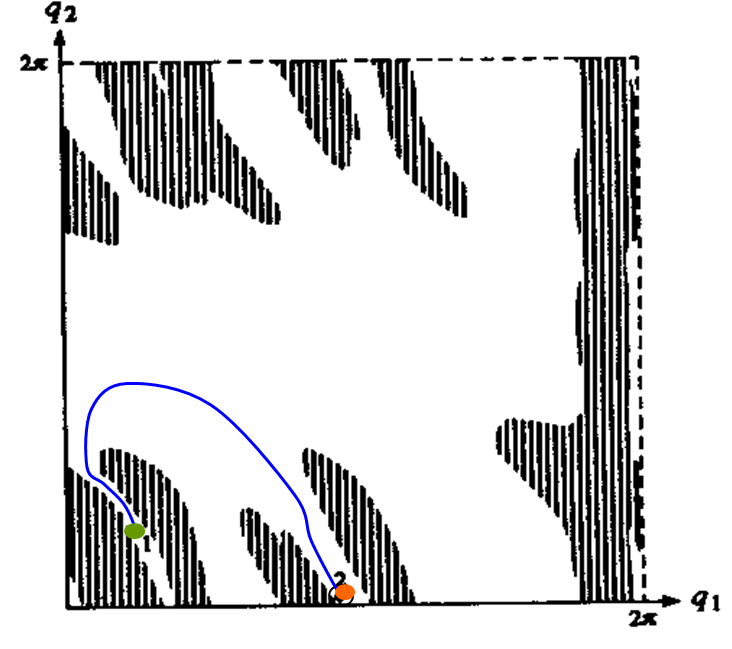}
        \caption{}
        \label{fig:1b}
    \end{subfigure}
    \\
    \begin{subfigure}[b]{0.48\columnwidth}
        \centering
        \includegraphics[width=\columnwidth]{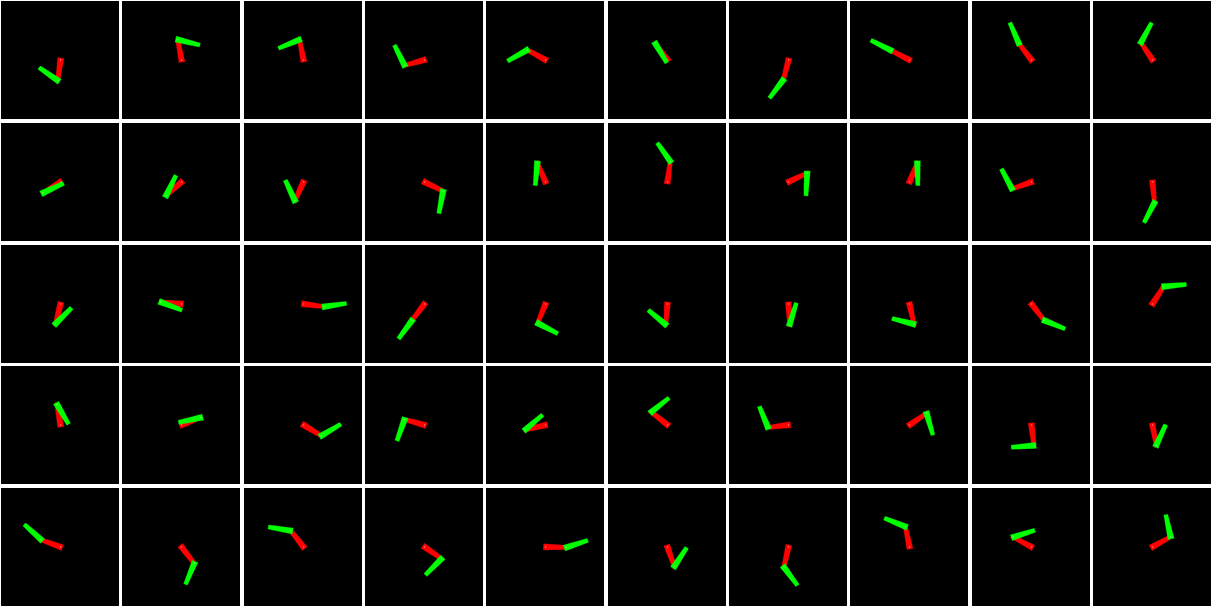}
        \caption{}
        \label{fig:1c}
    \end{subfigure}%
    \hspace{0.1mm}
    \begin{subfigure}[b]{0.48\columnwidth}
        \centering
        \includegraphics[width=\columnwidth]{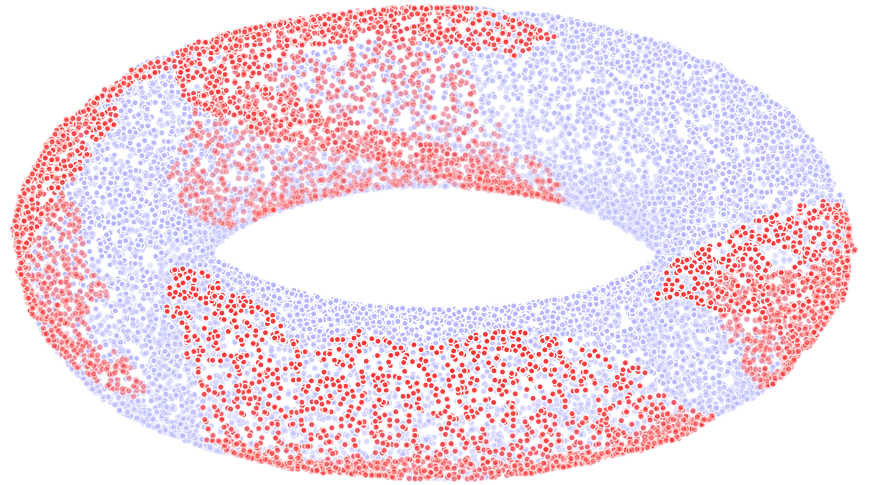}
        \caption{}
        \label{fig:1d}
    \end{subfigure}
    \\
     \begin{subfigure}[b]{0.48\columnwidth}
        \centering
        \includegraphics[width=\columnwidth]{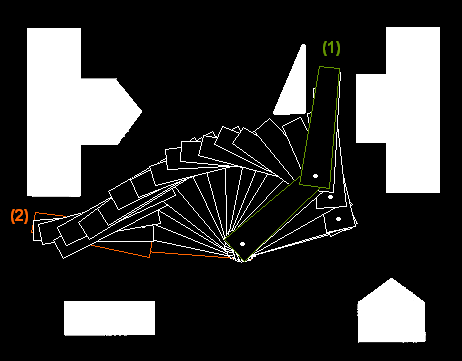}
        \caption{}
        \label{fig:1e}
    \end{subfigure}
    \hspace{0.1mm}
    \begin{subfigure}[b]{0.48\columnwidth}
        \centering
        \includegraphics[width=\columnwidth]{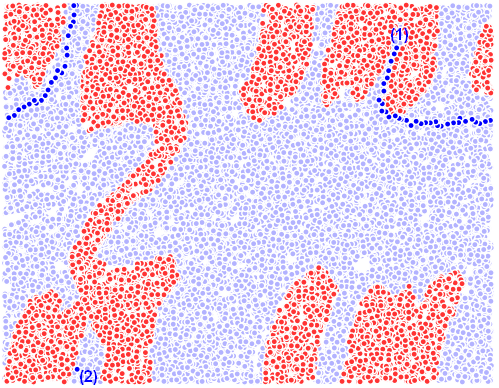}
        \caption{}
        \label{fig:1f}
    \end{subfigure}
    \caption{{\em Overview example}:
        (a) A 2-link planar arm
	      from~\cite{latombe-1996_robot} - 
	      both joints rotate fully around (C-space is a torus).
        (b) the canonical C-space
            from~\cite{latombe-1996_robot} and a path for the poses in (a).
        (c) Fifty of 20000 sample images from a simulation of a similar arm. 
        (d) The image manifold $\mathcal{V}$, visualized here
            in $\mathbb{R}^{3}$ using the Isomap
            algorithm~\cite{tenenbaum-deSilva-langford-00_isomap-manifold-dimensionality-reduction}, shows that the robot images - from a $570\times570$-dimensional
            image space - lie on the 2-D surface of a torus.  Thus
            the {\em Visual Configuration Space} (VCS) has the same structure
            as the canonical C-Space. The Visual Roadmap (VRM)
            is a graph embedded on the VCS, represented as a set of
            tangent spaces (charts). 
            Red points are non-free configurations, identified
            by overlapping background-subtracted robot images with 
            obstacle images. 
        (e) Workspace trajectory between the two poses of (a),
            found using the VRM.
        (f) Path (blue dots) shown on the cut-open torus.  Note that the VCS
            dimension $q_{2}$ (vertical axis) has flipped $\theta_{2}$
            in (b), and both axes are circularly shifted.
    } 
    \label{fig:rmp-2dof}
  \end{center}
\end{figure}

Humans and animals routinely use prior sensorimotor experience to build
motor models, and use vision for gross motor tasks in novel environments. 
Achieving similar abilities, without having to calibrate a robot's own body
structure, or estimate exact 3-D positions,  is a touchstone problem
for robotics 
(e.g. see~\cite{engelberger-Joseph-1980_robotics-in-practice} ch.9).   Such an
approach would enable a robot to work in less controlled environments, as is
being increasingly demanded in social and interactive applications for
robots. 

There have been two methods for approaching this problem - either based on
learning a {\em body schema}~\cite{poincare-1895-space-and-geometry,
hoffmann-marques-10_body-schema-in-robotics_review,
pierce-kuipers-97_map-learning-uninterpreted-sensors,
philipona-oregan-2003_perception-of-sturcture-unknown-sensors,
arleo-smeraldi-04_cognitive-navigation_nonuniform-gabor_reinforcement,
stober-fishgold-kuipers-09_sensor-map-discovery}, or by
fitting a canonical robot
model~\cite{sigaud-sala-11_online-regression-learning-robot-models_survey}.
Body schema approaches have not scaled up to full scale robotic models
or used for  global motion planning, and
robot model regression requires intrusive structures on the robot
~\cite{sturm-11_approaches-to-probabilistic-model-learning-for-mobile-manipulation-robots}
and even then it cannot sense the environment. 

Another approach, {\em visual servoing} attempts to estimate the 
motion needed for small changes in image features. 
However, visual servoing models cannot construct models spanning large
changes in robot pose, since the pseudo-inverse
of the image Jacobian can be computed only over small motions.  Recently,
global motion planning algorithms have been proposed by stitching together
local visual servos~\cite{kazemi-gupta-10_path-planning-visual-servoing}, but
these require that the goal be constantly visible.

\subsection{Visual Generalized Coordinates}
The notion of Configuration Space is fundamental to conceptualizing
multi-body motion.  The configuration of a
system with $d$ degrees of freedom can usually be
specified in terms of $d$ independent parameters, known as {\em generalized
coordinates} (GC). 
Thus, for a planar robot arm with two links, as in fig.~\ref{fig:rmp-2dof}a,
the canonical choice for GC is to use 
the joint angles $(\theta_1, \theta_2)$.  However, this is
only one of many (potentially infinite) choices of coordinates, 
each resulting in a different C-space.  GCs
need not specify joint angles or any
motion parameter - they just need to uniquely specify the pose.
One of our main aims is to show that an alternate GC 
can be learned from the robot's appearance alone, i.e. from a set of images.
These visual coordinates are homeomorphic to
the canonical coordinates - as in fig.~\ref{fig:rmp-2dof}d, where we note
that the image manifold (the {\em Visual Configuration Space}, VCS)
is a torus, just like the canonical  $(\theta_1,
\theta_2)$ manifold.   This is particularly notable since the image
dimensionality $\approx3\times10^5$, so the image space is enormous; 
yet the images that can show robot poses lie on this tiny 
two-dimensional subspace.  This can be explained by noting that
the probability of a random image being a robot image is vanishingly small. 
The {\em Visual Manifold theorem} below formalizes these claims. 

Such a system would have many advantages.  For example, if an unknown robot
is given to us, we can learn its VCS by observing a
random set of poses as it moves.  In fact, 
this idea draws inspiration from proposals for how an infant learns
to use its limbs~\cite{von-Hofsten-04trics_action-perspective-on-motor-development,adolph-berger-06_motor-development}.
Now if a task is specified visually - 
such as an object to grasp - the pose desired for executing it may be easier
to specify in terms of its image, or that of its gripper (the set of gripper
images also form an equivalent manifold). 
Given a novel goal image, the system can interpolate between nearby
known poses to reach the desired pose (inverse kinematics). Assuming we have a
controller that can repeat previously seen poses, the robot can now reach for the
object by traversing a set of landmark images. 
Obstacles introduced now can be superimposed on the
set of images to identify collision configurations. 
Given a set of possibly multiple cameras, this enables
the system to find paths avoiding obstacles. 
Parts of its body or workspace that is accessed repeatedly can have
a finer model (by sampling more images from this part of the workspace). 
The system can cluster task trajectories to learn action
schemas etc.   
One could then ``imagine'' the consequence of a motor command, and compare
these quickly to find discrepancies~\cite{stening-jacobsson-05_imagination-of-sensorimotor-robot-flow}.  
All this is done without any  
knowledge of robot kinematics or shape, its environment, or
even the camera poses.  

%
%
The model proposed here has some constraints. 
It requires that the set of cameras be
able to see the robot in all its poses.  Thus, it 
is more suited for articulated
robot arms, though it would also work for a mobile robot seen from a roof
camera.
Also, it requires that every pose of the robot
must be visually distinguishable - i.e. different poses should look different
from at least one of the camera views.

Our main contributions are a) to show that
configuration spaces based on robot images, and not joint angles, exist;
b) present a sampling-based algorithm that
takes a large set of robot pose images, and 
constructs piecewise approximations
in terms of local neighbourhoods on the image manifold. 
c) demonstrate how such a visual C-space can be equivalently used
to find a roadmap and identify poses (inverse kinematics), and plan motions. 

%

Section~\ref{sec:flowchart} gives an overview of the algorithm. 
Section~\ref{sec:vcs} presents a
theoretical analysis proving the existence of the VCS 
(Visual Manifold Theorem) and that obstacles in the workspace can be modelled
via image superposition (Visual Collision Theorem).  
Section~\ref{sec:VRM-MP} constructs a discrete model of the image manifold
$\mathcal{V}$ by stitching together neighbouring images into a graph
that we call the {\em Visual Roadmap} (VRM).  
This is analogous to roadmaps used in
sampling based motion planning~\cite{latombe-1996_robot,Lav06}. 
During the VRM construction, only immovable parts of the environment are
present; obstacles etc can be introduced later. 
For motion planning purposes, the robot foreground in each image is obtained by
removing the fixed background; these are superimposed on an obstacle image to
identify the collision states. Note that this is equivalent to modelling the
obstacle as the convex hull of the visibility cones 
(Fig.~\ref{fig:multi-view-cameras}).   
Each edge between neighbouring free space nodes is now tested
using one of several visual local planners (section~\ref{sec:local_planner}). 
Now, given images for the Start and Goal poses of the robot, one
can add edges from
these to the nearest safe neighbours in the VRM, and find a path on the
graph.  Section~\ref{sec:empirical_results} presents an empirical analysis of
the various choices for image space metrics and local planners, and
section~\ref{sec:real-robots} shows some demos on real robots. 

\begin{figure}[t!]
\begin{center}
\includegraphics[width=0.9\columnwidth]{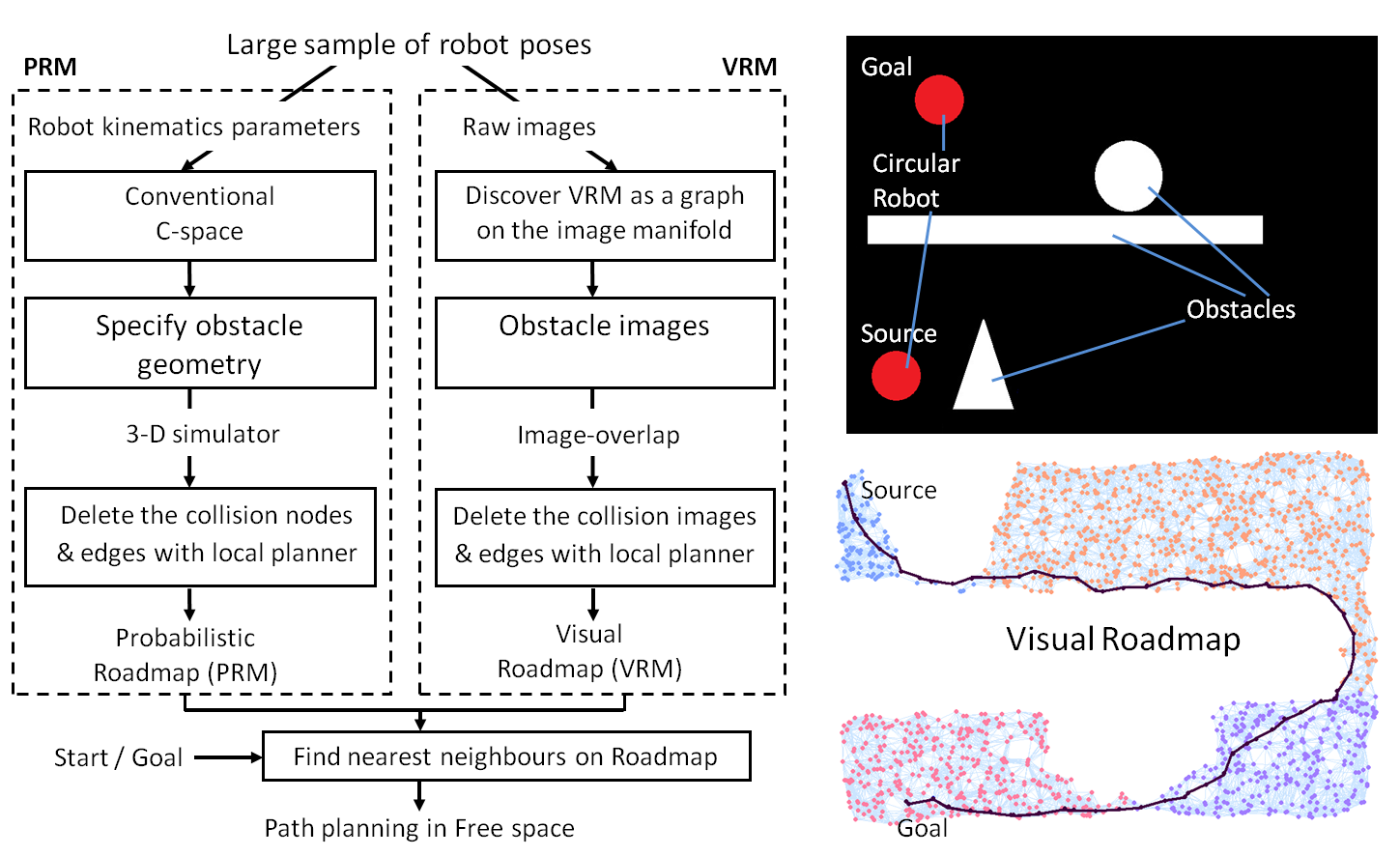}\\
\end{center}
\caption{{\em Visual Roadmap as an analogue of
Probabilistic Roadmap (PRM)}.
In the Visual Roadmap (VRM) approach, a graph is constructed
from the neighbourhoods in image space. 
This requires no
knowledge of robot kinematics or geometry.  Just as with
PRM, one now deletes nodes overlapping the obstacles, and constructs
a path on the remaining edges of the graph. 
The process is illustrated with a simulated mobile robot : the
manifold (bottom right) is constructed solely from a sample of 2000 
images.  The VRM graph is shown with all obstacle nodes removed
and a path identified for a given source and goal.  As in
fig.~\ref{fig:rmp-2dof}f, 
manifold discovery preserves topology but may flip / deform the map. } 
\label{fig:flowchart}
\end{figure}

\section{Algorithm overview and inverse kinematics}
\label{sec:flowchart}
Going from a configuration $q \in \mathcal{Q}$ to the workspace robot shape
and its inverse - known as forward and inverse kinematics - traditionally
involves careful assignment of  
coordinate frames and complex transformations between these.
In the Visual Generalized Coordinates approach, once we have the VRM, inverse
kinematics can be computed for any desired 
pose, presented as an image $x$.  To do this, we find the
images nearest to $x$, and interpolate between these on
the local chart on $\mathcal{V}$.  Now, to plan motions, 
traditional methods require explicit knowledge of obstacle 
geometry as well as a simulator for testing collisions.  Both these are
replaced by visual intersection



Fig.~\ref{fig:flowchart} shows an overview of the VRM algorithm, demonstrated
on a simulated mobile robot.  The idea of the {\em Visual Roadmap} is an
analog to the Probabilistic Roadmap, in that it is a graph $\{V,E\}$ where
$V$ is the set of images sampling the entire workspace, and $E$ the edges
connecting local neighbours.  This is the heart of this work, where the
conventional configuration description $\mathcal{Q}$ (e.g. the joint angle
space) is replaced by a completely different GC
based on images.
The latent
space ${\cal V}$ here is discovered from images, and is homeomorphic to
$\mathcal{Q}$.  

\subsection{Visual Roadmap}
\label{sec:comput}
Discovering Visual Generalized Coordinates requires us to
find the neighbours of an image in image space.  This requires
an image metric - e.g. Fig.~\ref{fig:flowchart} uses a simple euclidean
metric.  
An alternate metric may be to evaluate the swept volume between two poses. 
This can often be effectively approximated by the maximum distance
covered by any point~\cite{kavraki-latombe-overmars-96_PRM-high-dimensional}. 
(see section~\ref{sec:itp}.  Poorer metrics may corrupt the neighbourhood and
call for much denser samples.  Thus, we find that track distance based
metrics, or Hausdorff measures, outperform the Euclidean metrics and require
order of magnitude less samples for the same results~\ref{subsec:metrics}. 

Image space neighbourhoods are then used to construct a local-PCA based
nonlinear 
manifold~\cite{kambhatla-leen-97_dimension-red-by-local-PCA}.  The graph
based on the neighbourhoods is the VRM.  
We observe that there can be situations where multiple robot poses 
look alike to the imaging system (see fig.~\ref{fig:visual-symmetry}).
A critical assumption underlying our approach is that each pose
is distinguishable at least from one camera. 
This is the {\em Visual distinguishability assumption}.  In practice, most
robots already meet this criteria.  

We thus show that the
system can discover a compact non-metric model, that retains the
structure of the conventional Configuration Space $\mathcal{Q}$, but one
that is derived solely based on a dense latent space discovery.  
The discovered lower-dimensional space $\mathcal{V}$ can be
mapped to the robot image space $\mathcal{I}$ and to the
traditional C-space $\mathcal{Q}$.  
We also observe that these 
mappings are the visual analogues for forward  and inverse 
kinematics as in traditional robotics.

\section{Visual  Configuration Space}
\label{sec:vcs}

In order to understand the idea of the Visual Configuration Space,
let us consider the space of images of a robot
$\mathcal{I}$ (e.g. for the 2-DOF robot of 
fig.~\ref{fig:rmp-2dof}).  The input 
Images are high dimensional - if each image is
$640 \times 480$ (approximately $10^5$) pixels, then  
$\mathcal{I} \subset \mathbb{R}^{3 \times 10^{5}}$.
However, given an image $x$ in $\mathcal{I}$, it can be altered as many
ways as the degrees of freedom
$d$, without the resulting image leaving $\mathcal{I}$.  So the
intrinsic dimensionality of $\mathcal{I}$ is 2.
Further, as the links keep rotating, in the end the
image sequence returns to the original image.  Thus the topology of
$\mathcal{I}$ is not 
euclidean ($\mathbb{R}^2$), but a $d$-torus ($S^1 \times S^1$ for the 2-DOF robot).
Also, the image space changes smoothly as it moves, and so the mapping is {\em
  diffeomorphic}.
Thus if  $\mathcal{V}$ is a smooth dense latent space for $\mathcal{I}$, then
every 
image neighbourhood in $\mathcal{I}$ maps to a neighbourhood in $\mathcal{V}$,
and these maps change smoothly as one moves through the poses of the robot.

\begin{figure}[t] \centering
\includegraphics[width=\columnwidth]{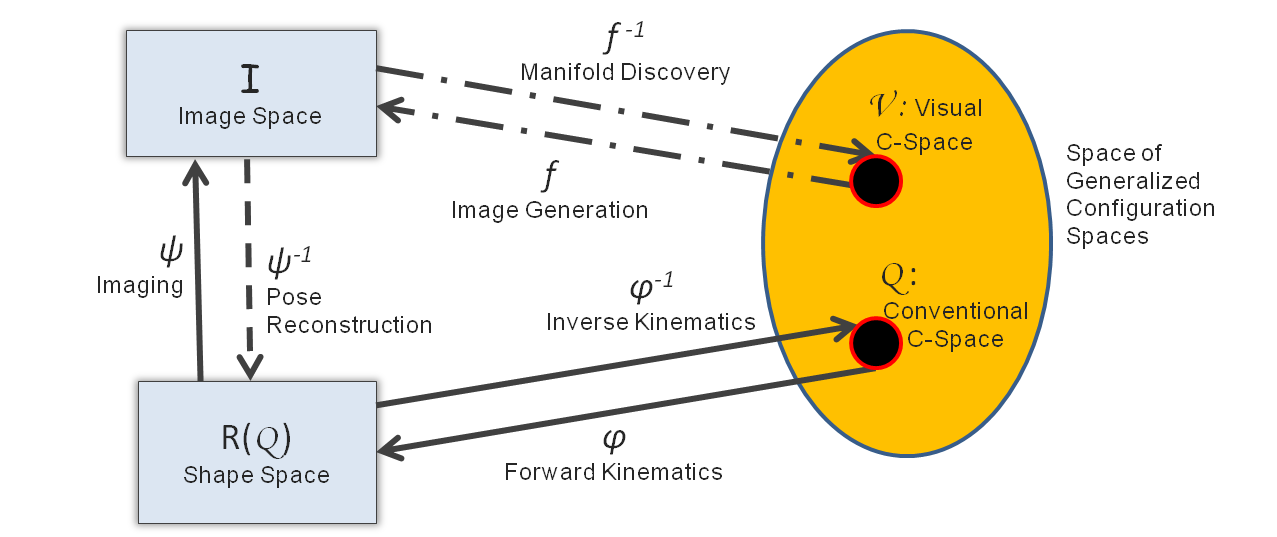}
\caption{ {\em Robot pose, Robot shape, Image and Visual spaces}: 
In order for visual generalized coordinates to exist, 
two robot poses cannot generate the same image.
If this condition holds, we show that any coordinate
for the image manifold constitutes a generalized coordinate system. 
Under such conditions, 
the map $\psi\circ\phi$ between $\mathcal{Q}$, the shape space $R(\mathcal{Q})$
and the image space  $\mathcal{I}$ is bijective, and the image manifold does
not  self-intersect.   The latent space
$\mathcal{V}$ is a specification of generalized coordinates on the image
manifold, and is a member of the collection of C-spaces.
The bijective map $f: \mathcal{V} \leftrightarrow \mathcal{I}$ 
relates robot images to unique points in
$\mathcal{V}$. Given sample images $X \subset \mathcal{I}$, we 
estimate $\mathcal{V}$ via a Local Tangent Space approximation of the
manifold, and do not explicitly compute the mappings
$f$ and $f^{-1}$ (coordinates in $\mathcal{V}$ - shown as dashed lines). 
}
\label{fig:visual-manifold-theorem}
\end{figure}

While these properties hold for the continuous image space, in
practice we work with a representative sample
$X = \{x_1...x_n\} \subset {\cal I}$.
There are a number of manifold
discovery algorithms that one could use (e.g. Isomap~\cite{tenenbaum-deSilva-langford-00_isomap-manifold-dimensionality-reduction}).
However, such methods have difficulty in
introducing new data points and in interpolating local data, so 
we avoid computing the manifold altogether, and restrict ourselves to a
piecewise algorithm, as
in~\cite{kambhatla-leen-97_dimension-red-by-local-PCA,yang-wang-05_better-scaled-local-tangent-space}.

We now establish the conditions under which the space of all images of the
robot would also form a homologous manifold. 

%





\subsection{Visual Distinguishability} 
In general, the imaging transformation $\phi$ is not invertible - i.e. the 3D
positions are not recoverable from the image.  It is only because
the image is being generated under motion constraints, that one can
find a map from the image space to a unique low-dimensional map.  However, this
does not hold in all situations (fig.~\ref{fig:visual-symmetry}); hence we
require that in practice,
there be some colour textures on the robot body, or a restricted range of motion,
that permits distinguishability of all robot poses. 
This is the {\em visual distinguishability} assumption. 

Let $R_q$ be the set of all points of the
workspace occupied by the robot (its volume) in configuration $q$, and 
let $R(\mathcal{Q}) = \{R_q: q \in
\mathcal{Q}\}$ be the set of all robot shapes.  Let $\phi: \mathcal{Q}
\rightarrow R(\mathcal{Q})$ and $\psi: R(\mathcal{Q}) \rightarrow
\mathcal{I}$ be the functions that map a configuration to a shape and
a shape to an image respectively. 
Then the
visual distinguishability assumption requires that the function
$\psi \circ \phi: \mathcal{Q}
\rightarrow \mathcal{I}$ be a bijection as illustrated in
fig.~\ref{fig:visual-manifold-theorem}.

The imaging transformation $\psi\circ\phi$ maps each configuration $q$ to an image
$I_q$ projected by the boundary $\delta R_q$ of shape $R_q$. 
If the visual distinguishability assumption holds, then 
both
$\phi$ and $\phi^{-1}$ exist and are continuous, because small changes in the
robot configuration lead to small changes in its shape and the
corresponding images and vice versa. So, whenever $\mathcal{Q}$ is a
manifold, $\mathcal{I}$ is also a manifold of the same dimension
(i.e., for a $d$ DOF robot the image space is a $d$-dimensional
manifold).  This is the manifold on which the Visual Roadmap (VRM) is
constructed. 

\subsection{Visual Manifold Theorem}
\begin{defn}
A {\em Smoothly Moving Piece-wise Rigid body} (SMPR) is any system 
with a smooth map from its configuration space to its shape space.
\end{defn} 
\begin{lemma}
For a $d$-DOF SMPR, the configuration space is a $d$-dimensional
topological manifold.
\end{lemma}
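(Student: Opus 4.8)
The plan is to obtain the manifold structure on $\mathcal{Q}$ from the mechanical makeup of the SMPR, reducing the claim to the classical fact that configuration spaces of articulated rigid-body systems are smooth (hence topological) manifolds. First I would decompose the body into its finitely many rigid pieces together with the joints connecting them. Each lower-pair joint carries a canonical Lie-group configuration space: $S^{1}$ for a revolute joint, $\mathbb{R}$ (or, for a joint with limits, a compact interval that we replace by its interior) for a prismatic joint, $SO(3)$ for a spherical joint, $SE(3)$ for a free-floating base, $SE(2)$ or $\mathbb{R}^{2}\times S^{1}$ for a planar mobile base, and so on. Every one of these is a smooth manifold of fixed dimension (automatically Hausdorff and second countable), and since a smooth manifold is in particular a topological manifold, it suffices to exhibit the smooth structure on $\mathcal{Q}$.

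For an \emph{open} kinematic chain (a tree, so there are no loop-closure constraints) I would then take $\mathcal{Q}$ to be the Cartesian product of the individual joint configuration spaces. A finite product of smooth manifolds is a smooth manifold of dimension equal to the sum of the factor dimensions, and this sum is by definition the degree of freedom $d$; hence $\dim\mathcal{Q}=d$. For a general mechanism with closed loops I would instead present $\mathcal{Q}$ as the zero set of a loop-closure map $h:\widetilde{\mathcal{Q}}\to\mathbb{R}^{m}$, where $\widetilde{\mathcal{Q}}$ is the product configuration space of the cut-open chain; wherever $0$ is a regular value of $h$ --- i.e. away from kinematic singularities --- the preimage (regular value) theorem makes $\mathcal{Q}$ a smooth submanifold, and ``$d$-DOF'' is precisely the statement that $\dim\widetilde{\mathcal{Q}}-\mathrm{rank}\,dh=d$ there. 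The smoothness hypothesis built into the definition of an SMPR is exactly what guarantees $h$ is $C^{\infty}$, so that these transversality arguments apply, and also that the chart transitions (angle wraparound on the $S^{1}$ factors, etc.) are smooth.

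The main obstacle is the behaviour at kinematic singularities of a closed-loop mechanism: there $dh$ drops rank, the regular-value argument fails, and $\mathcal{Q}$ can genuinely cease to be a manifold --- for instance a four-bar linkage acquires a crossing at a folded pose. I would dispose of this in the present setting by recording it as an explicit assumption: restrict attention to the regular stratum of $\mathcal{Q}$, which is generic and in particular covers every system treated in this paper, since open-chain manipulators and planar mobile bases carry no loop constraints at all. A secondary, purely bookkeeping point is the meaning of ``$d$ DOF'' in the presence of joint limits; I would handle it by working with the open configuration space (all joints strictly interior to their ranges), so that ``topological manifold'' needs no ``with boundary'' qualifier and the discarded boundary is a lower-dimensional set irrelevant to the later constructions on $\mathcal{V}$.
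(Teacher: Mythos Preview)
The paper does not actually prove this lemma: it is stated immediately after Definition~1 and followed directly by Definition~2, with no proof environment in between. The authors treat it as a standing fact.

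Your argument is correct as a stand-alone proof and is considerably more detailed than anything the paper offers. Two remarks are worth making. First, notice that the paper's Definition~1 already \emph{presupposes} most of what you are constructing: an SMPR is ``any system with a smooth map from its configuration space to its shape space,'' and for such a map to be smooth, $\mathcal{Q}$ must already carry a smooth (hence topological) manifold structure. Together with the convention that ``$d$-DOF'' means $\dim\mathcal{Q}=d$, the lemma is close to a tautology in the paper's framework. Your joint-by-joint product construction and regular-value argument for closed chains are the right way to \emph{justify} that presupposition, but they are supplying content the paper simply assumes. Second, your caveats about kinematic singularities and joint-limit boundaries are well placed; the paper silently sidesteps both by only ever instantiating open-chain arms and planar mobile bases, exactly the cases you flag as unproblematic.
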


\begin{defn}
A {\em visually distinguishable system}  is one for which the
visual distinguishability assumption holds.  
\end{defn}

Hence, for a visually distinguishable SMPR, $\psi \circ \phi: \mathcal{Q}
\rightarrow \mathcal{I}$ is a homeomorphism.

\begin{thm}
Whenever $\mathcal{Q}$ is a manifold, $\mathcal{I}$ is a manifold of
the same dimension.
\end{thm}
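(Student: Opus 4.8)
The statement claims: whenever $\mathcal{Q}$ is a manifold, $\mathcal{I}$ is a manifold of the same dimension. By the discussion immediately preceding the theorem, for a visually distinguishable SMPR the composite map $\psi\circ\phi\colon \mathcal{Q}\to\mathcal{I}$ is a homeomorphism onto its image. So the plan is essentially to package this remark into a clean argument: a space that is homeomorphic to a $d$-dimensional topological manifold is itself a $d$-dimensional topological manifold, since "being a manifold of dimension $d$" is a topological invariant. The core of the proof is therefore (i) establish that $\psi\circ\phi$ is a homeomorphism, and (ii) transport the manifold structure along it.

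First I would fix notation: $\mathcal{I}$ denotes the image of $\psi\circ\phi$, i.e.\ the set of all robot images, with the subspace topology inherited from $\mathbb{R}^{3\times 10^5}$ (or whatever the ambient pixel space is). Then I would verify the three properties of $\psi\circ\phi$. Bijectivity onto $\mathcal{I}$ is exactly the visual distinguishability assumption, restated in the Definition above. Continuity of $\psi\circ\phi$ follows because $\phi$ is continuous (small configuration changes give small shape changes, as argued in the Visual Distinguishability subsection) and $\psi$ is continuous (the projection/imaging map depends continuously on the occupied volume, or at least on its boundary $\delta R_q$). Continuity of the inverse is the delicate direction; I would handle it either by the hypothesis already asserted in the text ("$\phi$ and $\phi^{-1}$ exist and are continuous"), or — cleaner — by invoking compactness: if $\mathcal{Q}$ is compact (true for the torus of revolute joints, and more generally one restricts to a compact range of motion), then a continuous bijection from a compact space to a Hausdorff space is automatically a homeomorphism, so continuity of the inverse comes for free. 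I would state the compactness hypothesis explicitly as part of the SMPR setup or as a standing assumption.

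Given that $h := \psi\circ\phi\colon \mathcal{Q}\to\mathcal{I}$ is a homeomorphism, I would then transport charts. By Lemma 1, $\mathcal{Q}$ is a $d$-dimensional topological manifold, so it has an atlas $\{(U_\alpha,\varphi_\alpha)\}$ with each $\varphi_\alpha\colon U_\alpha\to\mathbb{R}^d$ a homeomorphism onto an open subset. Then $\{(h(U_\alpha),\varphi_\alpha\circ h^{-1})\}$ is an atlas for $\mathcal{I}$: each $h(U_\alpha)$ is open in $\mathcal{I}$ since $h$ is a homeomorphism, the sets cover $\mathcal{I}$ since $h$ is surjective, and each $\varphi_\alpha\circ h^{-1}\colon h(U_\alpha)\to\mathbb{R}^d$ is a composition of homeomorphisms. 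Second countability and Hausdorffness of $\mathcal{I}$ are inherited from $\mathcal{Q}$ through $h$ (or simply from the fact that $\mathcal{I}\subset\mathbb{R}^N$). Hence $\mathcal{I}$ is a topological $d$-manifold; in particular, for the $d$-DOF case, $\dim\mathcal{I}=d$, matching the statement. If one wants the stronger diffeomorphic conclusion hinted at in the introduction, the same pushforward-of-charts argument applies with "smooth" in place of "continuous", provided $\psi$ is smooth — but that is beyond what the theorem as stated requires.

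The main obstacle is the continuity of $h^{-1}$, equivalently the claim that visually-nearby images come from kinematically-nearby configurations. A priori a sequence of configurations could wander far in $\mathcal{Q}$ while their images converge, and ruling this out in full generality requires either the compactness argument above or a quantitative lower bound (an "inverse modulus of continuity") on how much the image must change under a configuration change — which is really where visual distinguishability does its work. I would therefore foreground the compactness route as the rigorous backbone and treat the informal "small changes $\leftrightarrow$ small changes" language in the surrounding text as motivation rather than proof. A secondary, more technical gap is the continuity (and domain of definition) of $\psi$ at configurations where the visible silhouette changes combinatorially — e.g.\ self-occlusion boundaries; I would note that under the multi-camera and distinguishability assumptions these form a measure-zero set that does not obstruct topological continuity of the overall map, and flag a fully rigorous treatment of $\psi$ as outside the scope of this sketch.
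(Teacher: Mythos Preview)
Your proposal is correct and follows essentially the same route as the paper: argue that $\psi\circ\phi$ is a homeomorphism and then invoke the topological invariance of being a $d$-manifold. Your treatment is in fact more careful than the paper's own proof, which merely asserts that $\phi^{-1}$ and $\psi^{-1}$ exist and jumps directly to ``$\mathcal{I}$ is homeomorphic to $\mathcal{Q}$'' without addressing continuity of the inverse; your compactness argument and explicit chart-pushforward fill gaps the paper leaves open.
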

\begin{proof}
The imaging transformation $\psi \circ \phi$ maps each configuration
$q$ to an image $I_q$ projected by the boundary $\delta R_q$ of shape
$R_q$. If the {\em visual distinguishability} assumption holds, then
both $\phi^{-1}$ and $\psi^{-1}$ exist, and 
the image space $\mathcal{I}$ is homeomorphic
to the configuration space $\mathcal{Q}$.  Hence  $\mathcal{I}$
constitutes a manifold of the
same dimension as that of $\mathcal{Q}$, whenever $\mathcal{Q}$ is a
manifold.
\end{proof}

Fig.~\ref{fig:CRS-joint-manifold} row 1(c) shows a plot of the
residual variance against degrees of freedom for a 2-DOF SCARA arm; the
model is 
clearly captured by a manifold of intrinsic dimensionality two.

\begin{figure}[b]
\centering
\includegraphics[width=0.35\columnwidth]{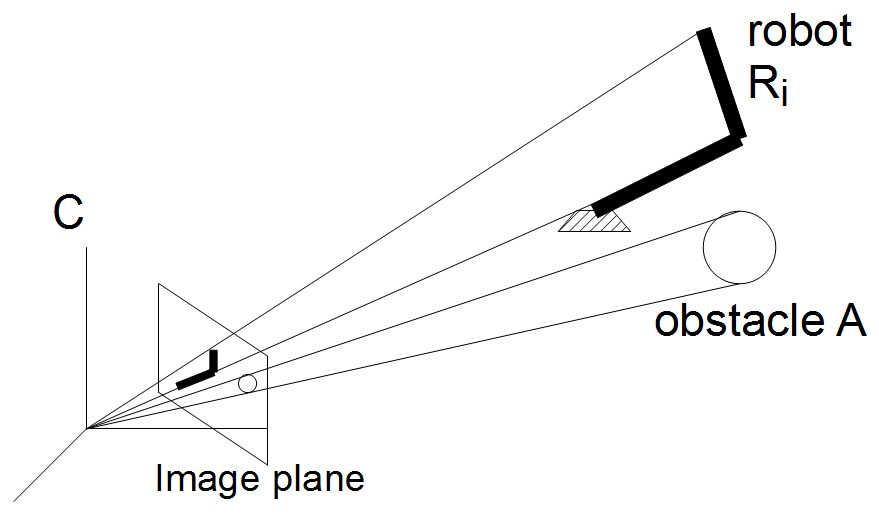}
\includegraphics[width=0.25\columnwidth]{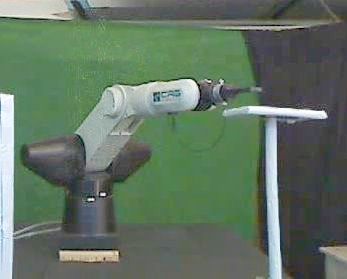}
\includegraphics[width=0.25\columnwidth]{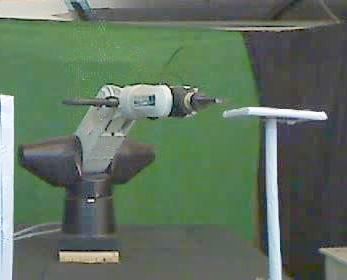}
\caption{
  {\em Imaging the workspace}.  The robot and obstacle lie along the
  projection bundle from the optical center via their image regions 
  in the virtual image plane (left).  If these bundles do not
  intersect, $R \cap A=\emptyset$.  However, the converse is not true.
  Mid \& right:
{\em Images of CRS A465 6-axis robot}  appear to be neighbouring poses,
but close observation reveals that the base joint $\theta_1$ has
rotated by nearly 180 degrees, while $\theta_2$ and $\theta_3$ have
changed sign. Such situations are avoided in the analysis by additional
cameras (e.g. on the gripper), or by adding decals.  Another method for
handling such cases would be to jointly map motor and visual data onto the
same fused manifold }
\label{fig:visual-symmetry}
\end{figure}

\subsection{Difficulties with Manifold discovery algorithms}
\label{sec:non-euclidean}
For robots which involve a motion with an $S^1$ topology,
the C-space and hence the VRM space is not globally Euclidean. For example,
the C-space of a freely-rotating 2-DOF articulated robot is
$S^{1} \times S^{1} = \bf{T}^2$, which is a torus~\cite{choset-05_robot-motion-theory}. Traditional
nonlinear dimensionality reduction (NLDR) algorithms (e.g.~\cite{tenenbaum-deSilva-langford-00_isomap-manifold-dimensionality-reduction})
assume that the target space for dimensionality reduction is a euclidean space (a subspace of
$\mathbb{R}^n$).
This means that a $d$-torus manifold, which is $d$-dimensional,
cannot be globally mapped to an
$\mathbb{R}^d$ space, with which it is locally homeomorphic.   Another
practical difficulty with NLDR algorithms is 
that it is very challenging to add new points to the manifold without
recomputing the entire structure.  

At the same time, the global non-linear coordinate
is little more than a convenience, and does not  materially affect the
modelling, which can be done in a piecewise linear manner. 
Thus, we avoid computing global coordinates altogether, and 
use the local neighbourhood graphs for planning global
paths and local tangent spaces, discovered using Principal Component 
Analysis (PCA), for checking the
safety of edges (local planner). These local tangent spaces, in
theory, correspond to charts which when stitched together form an
atlas for the image manifold.



We next describe how obstacles are mapped on the VCS for collision 
detection.


\begin{figure}[th]
\begin{center}
\includegraphics[width=0.6\columnwidth]{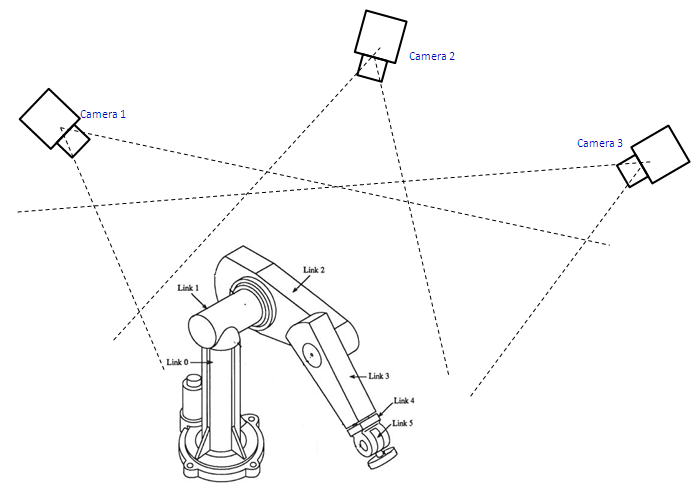}
\end{center}
\caption{{\em Conservative modelling of 3-D obstacles}. 
For 3-D obstacles, the robot image must not overlap with the obstacle in at
least one camera view.
If some part of the robot occludes the obstacle
in the cones for {\em all} the cameras 
the system will consider it to be a collision situation.  
  } 
\label{fig:multi-view-cameras}
\end{figure}

\subsection {Collision Detection in VCS}
\label{sec:collision}
In the imaging process, robot and  obstacle are mapped to a bundle of rays 
converging on the camera optical center (figure~\ref{fig:visual-symmetry}).   
 
Let $^{C}R_i$ be the bundle subtended at camera optical center
$^CO$ by the robot in configuration $q^{(i)}$, $^{C}A$ be the bundle
subtended at $^CO$ by the obstacle $A$ and $^{I}R_i$, $^{I}A$ be
the image regions corresponding to the robot and the obstacle.

\begin{lemma}
If $^{C}R_i \cap ^CA = \emptyset$ then $A \cap R(q^{(i)}) = \emptyset$.
\end{lemma}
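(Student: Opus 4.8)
The plan is to prove the contrapositive: assuming the robot volume and the obstacle share a point in the workspace, I exhibit a point lying in the intersection of their imaging bundles. The only content of the argument is the elementary geometric fact that the bundle subtended by a set at the optical center $^{C}O$ is the union of rays from $^{C}O$ through the points of that set, so every point of the set lies inside its own bundle.

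First I would make the notion of ``bundle'' precise. For a set $S$ in the workspace, define $^{C}S = \{\, ^{C}O + t\,(p - {}^{C}O) : p \in S,\ t \ge 0 \,\}$, the cone with apex $^{C}O$ over $S$ (its image region in the virtual image plane is the corresponding cross-section, as in Fig.~\ref{fig:visual-symmetry}). With this definition the containment $S \subseteq {}^{C}S$ is immediate — take $t = 1$. Applying this to $S = R(q^{(i)})$ and to $S = A$ yields $R(q^{(i)}) \subseteq {}^{C}R_i$ and $A \subseteq {}^{C}A$.

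Next, suppose for contradiction that $A \cap R(q^{(i)}) \neq \emptyset$ and pick a point $p$ in this intersection. By the two containments just noted, $p \in {}^{C}R_i$ and $p \in {}^{C}A$, hence $p \in {}^{C}R_i \cap {}^{C}A$, contradicting the hypothesis $^{C}R_i \cap {}^{C}A = \emptyset$. Therefore $A \cap R(q^{(i)}) = \emptyset$, which is the claim.

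The only point requiring a word of care is the degenerate case in which the optical center lies inside the robot or the obstacle; this is excluded physically, since the camera observes the scene from outside, and even if $^{C}O \in S$ one still has $S \subseteq {}^{C}S$, so the argument is unaffected. For context I would also remark that the converse fails: $^{C}R_i$ and $^{C}A$ can intersect merely because the robot occludes the obstacle along a line of sight while lying at a different depth — this is precisely why the test of Fig.~\ref{fig:multi-view-cameras} is conservative and why several camera views are combined. I do not anticipate any genuine obstacle here; the ``hard part'', such as it is, is only stating the bundle/cone definition cleanly enough that the containment $S \subseteq {}^{C}S$ is self-evident.
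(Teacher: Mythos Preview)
Your argument is correct: the contrapositive reduces the claim to the trivial containment $S \subseteq {}^{C}S$ for the cone over $S$ with apex $^{C}O$, and your formalization of the bundle as $\{\,^{C}O + t(p - {}^{C}O) : p \in S,\ t \ge 0\,\}$ makes this immediate. The paper itself states this lemma without proof, merely remarking afterward that ``robot configurations for which the bundles do not intersect with the obstacle bundle are guaranteed to be in the free space'' and that the converse fails; your write-up supplies exactly the missing justification and even anticipates the occlusion remark the paper makes next.
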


Thus, robot configurations for which the bundles do not intersect with the obstacle
bundle are guaranteed to be in the free space ${\mathcal F}$. Note that the
converse is not true.

\begin{lemma}
$^CA \cap ^CR = \emptyset$ iff $^IA \cap ^IR = \emptyset$. 
\end{lemma}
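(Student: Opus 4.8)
The statement to prove is: $^CA \cap\, ^CR = \emptyset$ iff $^IA \cap\, ^IR = \emptyset$, i.e.\ the bundles (cones of rays from the optical center) intersect exactly when their image regions intersect. The plan is to observe that the image region is just the radial projection of a bundle onto the virtual image plane, and that this projection is a bijection between rays through the optical center and points of the image plane, so set intersection is preserved in both directions.

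First I would set up notation precisely: fix a camera optical center $^CO$ and a virtual image plane $\Pi$ not passing through $^CO$. Each ray in a bundle is a half-line from $^CO$; the central projection $\pi$ sends such a ray to its unique intersection point with $\Pi$ (assuming the ray is not parallel to $\Pi$, which holds for rays that actually hit the sensor — I would note this as part of the imaging setup). The key structural fact is that $\pi$ is a bijection from the set of rays meeting $\Pi$ onto $\Pi$: distinct rays through a common point $^CO$ meet $\Pi$ in distinct points (two rays sharing two points would coincide), and every point of $\Pi$ determines a unique ray back to $^CO$. By definition $^IA = \pi(^CA)$ and $^IR = \pi(^CR)$.

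Next I would push the two directions through this bijection. If $^CA \cap\, ^CR \neq \emptyset$, pick a ray $r$ in the intersection; then $\pi(r) \in\, ^IA \cap\, ^IR$, so the image regions meet — this direction only needs that $\pi$ is a well-defined function. Conversely, if $^IA \cap\, ^IR \neq \emptyset$, pick an image point $p$ in the intersection; by surjectivity and injectivity of $\pi$ there is a \emph{unique} ray $r = \pi^{-1}(p)$, and since $p \in\, ^IA = \pi(^CA)$ there is some ray $r' \in\, ^CA$ with $\pi(r') = p$, whence $r' = r$ by injectivity; similarly $r \in\, ^CR$; so $r \in\, ^CA \cap\, ^CR$. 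Contrapositive gives the "$\emptyset$ iff $\emptyset$" form in the statement. I would also remark that combining this lemma with the previous one ($^CR_i \cap\, ^CA = \emptyset \Rightarrow A \cap R(q^{(i)}) = \emptyset$) yields the intended conclusion that a pixel-level non-overlap test in the image certifies collision-freedom in the workspace.

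The main obstacle — really the only subtlety — is the injectivity/surjectivity of the central projection, i.e.\ making sure the correspondence between rays and image points is genuinely one-to-one. This needs the standing assumption that the image plane does not contain the optical center and that we only consider rays actually intersecting the (finite) image plane; degenerate rays parallel to $\Pi$ or the half-line/full-line distinction for rays behind the camera must be excluded by the modelling conventions. Once that bijection is granted, both directions are immediate, so the bulk of the write-up is just stating the projection model carefully rather than any real computation.
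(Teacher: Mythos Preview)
Your argument is correct. The paper actually states this lemma \emph{without proof}: it is presented as an immediate consequence of the imaging model described in the surrounding text and Fig.~\ref{fig:visual-symmetry}, where $^CA$, $^CR$ are the ray bundles through the optical center and $^IA$, $^IR$ are their traces on the virtual image plane. Your write-up supplies exactly the justification the paper leaves implicit --- namely that central projection is a bijection between rays through $^CO$ meeting $\Pi$ and points of $\Pi$, so that intersection of bundles is equivalent to intersection of their images --- and your remark about excluding rays parallel to $\Pi$ (or behind the camera) is the right caveat to make the bijection honest. There is nothing to compare against; your proof is the natural one and is sound.
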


\begin{thm}
{\em (Visual Collision Theorem)}
For a robot in a given pose $q^{(i)}$, if  ${^I}R_i \cap {^I}A = \emptyset$, then
$q^{(i)} \in {\mathcal F}$.
\end{thm}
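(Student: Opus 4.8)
The plan is to read the theorem off as a three-link chain built from the two preceding lemmas, running the argument from the image plane back to the workspace. First I would unpack the hypothesis ${^I}R_i \cap {^I}A = \emptyset$: by the second lemma (equivalence of image-region disjointness and cone disjointness), this is the same statement as ${^C}R_i \cap {^C}A = \emptyset$, i.e. the projective bundle subtended at the optical center ${^C}O$ by the robot in pose $q^{(i)}$ is disjoint from the bundle subtended by the obstacle $A$. Next I would apply the first lemma, which says precisely that disjointness of these bundles forces $R(q^{(i)}) \cap A = \emptyset$ in the workspace — the underlying reason being that $R(q^{(i)}) \subseteq {^C}R_i$ and $A \subseteq {^C}A$, so two solids cannot meet when their enclosing cones are already separated. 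Finally, by the definition of the free space $\mathcal{F}$ as the set of configurations whose occupied volume meets no obstacle, $R(q^{(i)}) \cap A = \emptyset$ is exactly the assertion $q^{(i)} \in \mathcal{F}$, and the proof is complete.

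Two points I would make explicit. The implication is strictly one-directional — the converse fails because of occlusion: a robot link passing in front of (or behind) the obstacle along a line of sight produces overlapping image regions ${^I}R_i \cap {^I}A \neq \emptyset$ even though $R(q^{(i)}) \cap A = \emptyset$. Hence image overlap is only a conservative (sufficient) collision test, and the theorem is the safe half of it. When several cameras $C_1,\dots,C_m$ are available I would phrase the result with an existential quantifier over views: it suffices that ${^I}R_i \cap {^I}A = \emptyset$ in some single camera $C_j$, since then the $C_j$-bundles are disjoint and the workspace disjointness already follows from that one view. Dually, a configuration is flagged as colliding only when every camera shows overlap — the convex-hull-of-visibility-cones model of Fig.~\ref{fig:multi-view-cameras}.

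Since all the logical content is already carried by the two lemmas, I do not expect a genuine difficulty in the argument; the one thing requiring care — the nearest thing to an obstacle — is the bookkeeping about what the symbols denote once the fixed scene has been removed. The region ${^I}R_i$ must be the background-subtracted robot foreground and ${^I}A$ the region of the newly introduced obstacle, so that the pixel-level equivalence invoked in the second lemma genuinely holds; and $R(q^{(i)})$ must be the full occupied volume of the robot, not just its silhouette, so that the containment $R(q^{(i)}) \subseteq {^C}R_i$ used in the first lemma is valid. Once those definitions are pinned down there is nothing left to compute: the theorem is just the composition ${^I}R_i \cap {^I}A = \emptyset \Rightarrow {^C}R_i \cap {^C}A = \emptyset \Rightarrow R(q^{(i)}) \cap A = \emptyset \Rightarrow q^{(i)} \in \mathcal{F}$.
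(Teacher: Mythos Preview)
Your proposal is correct and matches the paper's own approach: the theorem is placed immediately after the two lemmas precisely so that it can be read off as the composition ${^I}R_i \cap {^I}A = \emptyset \Rightarrow {^C}R_i \cap {^C}A = \emptyset \Rightarrow R(q^{(i)}) \cap A = \emptyset \Rightarrow q^{(i)} \in \mathcal{F}$, and the paper does not even write out a separate proof environment for it. Your remarks on one-directionality, occlusion, and the multi-camera existential refinement also track the paper's surrounding discussion almost verbatim.
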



We note that the above is a necessary condition, but it is often rather
conservative.  Indeed, the
inverse condition defines {\em occlusion} situations:
where $A \cap R = \emptyset$ but $^CR \cap ^CA$ is non-null. 
This limitation is a result of the information loss in the imaging process.
These can be cause particular difficulties for articulated arms.  In such
cases, one may use multiple cameras; since the Visual Collision Theorem holds 
for all cameras, we may define any space as free if   $^{C}R \cap ^{C}A = \emptyset$ in at least one view.  In this situation, both robot and obstacle
are less conservatively modelled as the intersection of multiple cones.  



In general, for non-orthographic projections, the
higher the ratio of camera distance/focal length, the tighter the bound. 
(e.g the Scara robot arm in section~\ref{sec:scara}). 

\section{Visual Roadmap and Motion Planning Algorithms}
\label{sec:VRM-MP}
A colour image sample $X \in \mathbb{R}^{p \times n}$
where each $r \times c \times 3$ RGB
image is represented as a $p$-dimensional vector $(p = 3rc)$ of 
intensities. We assume that the images are captured
against a fixed background, which can be eliminated,
so that in the foreground images, a pixel is
non-zero if and only if it belongs to the robot. Image $x_i \in X$ 
corresponds to configuration $q_i \in Q$. Let $d(x_i,
x_j)$ be a suitable metric (e.g.  Euclidean distance between the image
vectors), and let $\mathcal{N}(x)$ be the set of $k$-nearest neighbours of
$x$. 

Next, we construct a graph $G(V, E)$ over the $n$ nodes so that $v_i \in
V$ corresponds to image $x_i$ (or configuration $q_i$). We add an edge
between two nodes $v_i$ and $v_j$ if 
either $x_i \in \mathcal{N}(x_j)$ or $x_j \in \mathcal{N}(x_i)$ and
assign edge weight $d(x_i, x_j)$. We call this graph the {\em Visual
Roadmap (VRM)}.


\subsection{VRM with Static Obstacles}
For handling obstacles, we
take the background subtracted images, and
test this intersection with the obstacle
image.  Non-empty overlaps imply that the configuration is not free and
we remove the corresponding node and its incident edges from $G$.
If $b \in \mathbb{R}^p$ is the obstacle image vector, then the set
of nodes to be removed from $G$ is 
$V_{collision} = \{v_i: x_i * b \ne \bf{0}\}$, where $*$ denotes entry-wise
product (Hadamard product)
and $\bf{0}$ is the zero-vector. Thus, we obtain a modified graph 
$G'(V', E')$ in which every node represents a free configuration.
However, the edges may still touch some part of the obstacle
in an intermediate pose. Guaranteeing edge-safety is the problem of local 
planner below.  
Note that this process applies to any number of static obstacles. 
%
%
\subsection{Local Planner in VRM}
\label{sec:local_planner}
We say that an edge $(u, v)$ of $G$ is \emph{safe}, if the geodesic from
$u$ to $v$ on the configuration manifold does not contain any image
that overalps with an
obstacle.  The geodesic is approximated by the shortest path on $G$. 
Given that the nodes $V$ are in the free space, we need to guarantee that every
edge is also safe. We describe three local planners that work with robot images
and can be used on visual roadmaps. To make sure that an edge is safe, 
these methods construct a new image that in estimates the swept
volume of the robot in the workspace and check this image for collision. 
Figure~\ref{fig:local_planner_interpolation} shows examples of images
generated by these local planners. 

\begin{figure}[th]
\begin{center}
\includegraphics[width=0.32\columnwidth]{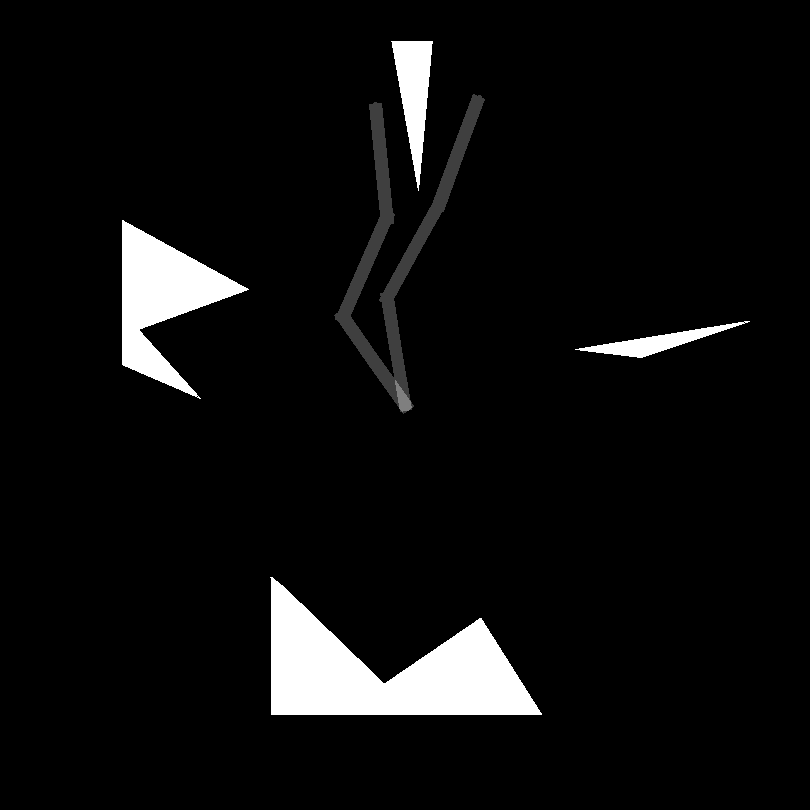}
\includegraphics[width=0.32\columnwidth]{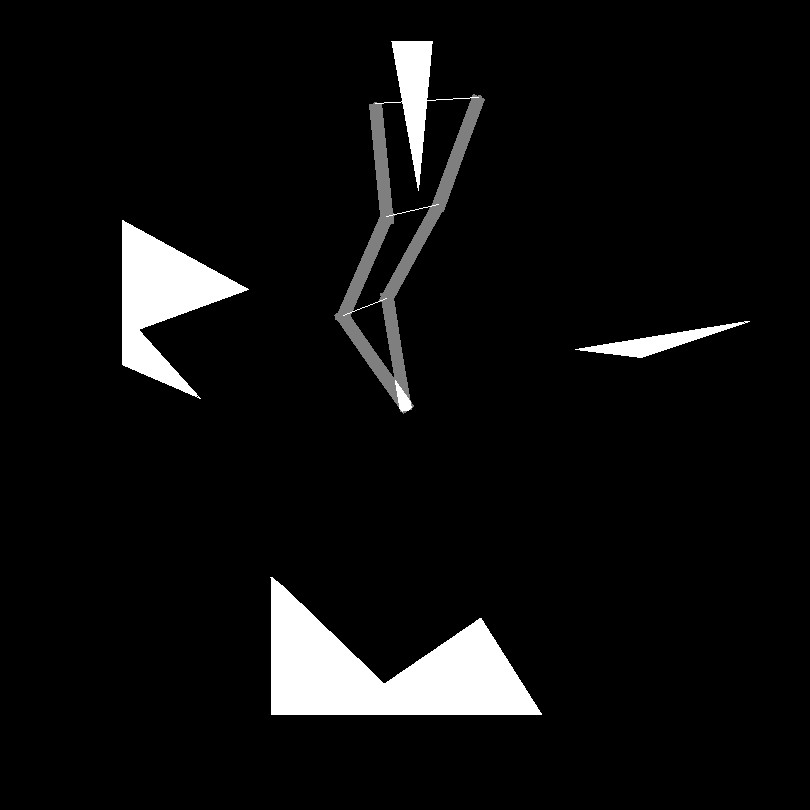}
\includegraphics[width=0.32\columnwidth]{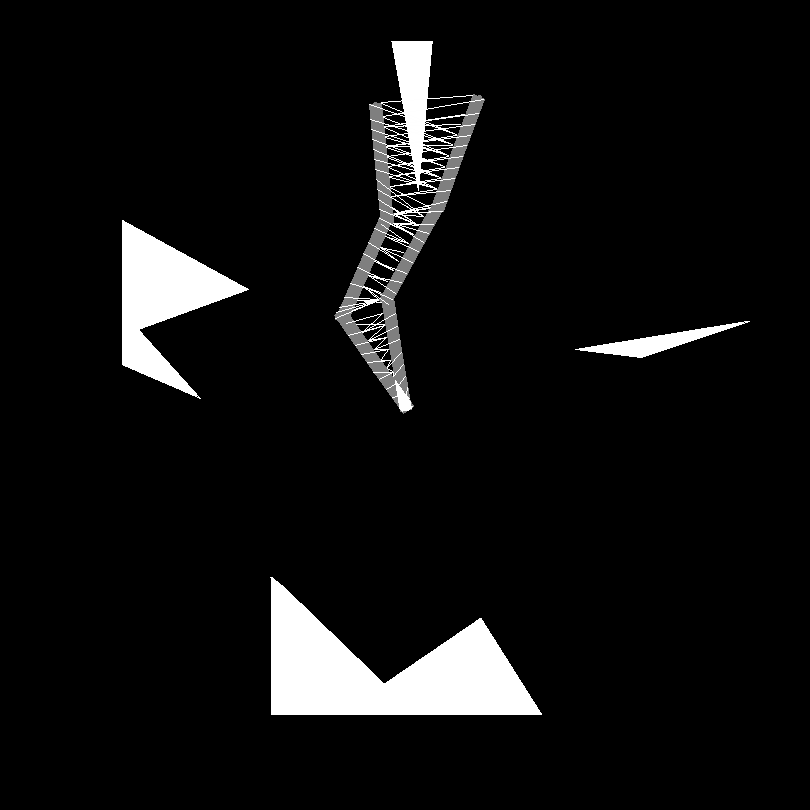}
\end{center}
\caption{{\em Interpolation in each local planner}. 
Local planner using (a)  interpolation on Local Tangent Space (LTS) (b) Ideal 
  Tracked-Points (ITP) (c) Joins of Nearest Shi-Tomasi features
  link-wise (JNST).
  } 
\label{fig:local_planner_interpolation}
\end{figure}

\subsubsection{Interpolation on the Local Tangent Space (LTS)}
\label{pca_lp}
 
For each edge $(u, v) \in E$, let $X^{(u, v)} =
\{x_q: q \in \mathcal{N}(u) \cap \mathcal{N}(v)\}$ be the ${p \times
m}$ matrix of images corresponding to the intersection of neighbours of $u$
and neighbours of $v$ (including $u$ and $v$), where $m$ is the cardinality
of $X^{(u, v)}$.
To see if $(u, v)$ is \emph{safe}, we interpolate the intermediate images on
the tangent space spanned by $X^{(u, v)}$, obtained using PCA.
The target dimension is the number of degrees of freedom $d$, and PCA
maps $X^{(u, v)}$ to a $Y^{(u, v)}$ ($d \times m$). In addition to $Y^{(u,
  v)}$, PCA also gives a ${p \times d}$ orthonormal matrix $W^{(u, v)}$ such
that $X^{(u, v)} = W^{(u, v)}Y^{(u, v)}$ or $Y^{(u, v)} = W^{(u, v)^T}X^{(u,
  v)}$. 
We then interpolate between $y_u$ and $y_v$ to construct $y^{(\alpha)}
=  \alpha*y_u + (1-\alpha)*y_v$ for various values of $\alpha \in (0,
1)$. For each $\alpha$, the image $x^{(\alpha)} = Wy^{(\alpha)}$ must
be in free space.
In practice, the resulting image is a poor interpolation, and rejects
many valid edges; however, the probability of an edge being unsafe
after being passed by the local planner is low (i.e. it is conservative).

The image obtained by a linear interpolation on the local tangent space (LTS)
is a
weighted 
sum of the images in $X^{(u, v)}$. Thus, for collision detection purposes, it
is sufficient to  look at the superimposition of images in $X^{(u, v)}$. This
achieves the same effect as the PCA based method described above, but avoids
the PCA computation.



\subsubsection{Ideal Tracked Points (ITP)}
\label{sec:itp}
Here we assume that a set of points on the robot body can be tracked in all
poses (including occlusions).  Then to see if an edge $(u, v)$ is safe, we
join each
pair of corresponding tracked-points to create
a new image.  This image (i.e. the set of trajectories of the tracked points)
are used for collision detection.  

\subsubsection{Join of Nearest Shi-Tomasi features (JNST)}
In practice, occlusion precludes the tracking of any set of points on the
robot body.  Here we propose an approximation based on high-contrast
points known as the Shi-Tomasi
features~\cite{shi-tomasi_1994_cvpr_good-features-to-track}.
We assume that each link of the robot can be separated and that the
Shi-Tomasi features are computed on each link.
Here we do not know the correspondences between points
in the two images.
The Join of Nearest Shi-Tomasi features approach (JNST) involves
associating each feature point on each
link in $u$ with the nearest feature point in the corresponding link in
$v$.  We do the same in both directions and insert the
joins on the image.

\subsection{Start and Goal states}

For motion planning on the VRM, we need to map 
the source ($s$) and target ($t$) images onto the
VRM $G$. We first ensure that the poses $s,t$ themselves are
in free space.  We then add  these to $G$ and connect them with their
$k$-nearest 
neighbours in $X$. We then run a local planner
on the new edges and find the shortest path
between $s$ and $t$ as before. Adding a new node (image) to the graph
is a computation that requires $O(nk)$ distance computation steps
for finding.
Time for distance calculation depends on the metric used.
This approach again, is almost identical to traditional roadmap
methods~\cite{choset-05_robot-motion-theory}, except that the tests are all
visual. 

\section{Empirical analysis : Metrics and Local Planners}
 \label{sec:empirical_results}
Factors affecting the quality of paths in VRM
include sampling density, the metric used, and the local
planner. We now present an empirical study of these
aspects (fig~\ref{fig:lp_plots}) on a planar 3-link 
simulated arm and a set of obstacles similar to those in
figure~\ref{fig:local_planner_interpolation}. 

\subsection{Gold Standard Local Planner}
In the traditional Configuration Space, two configurations
are assumed to be joined by a linear join between them. 
To see if an edge $(u, v)$ is actually safe, we generate
intermediate pose images by interpolating joint
angle vectors at an $\epsilon$ resolution.
We observe that a linear interpolation in joint angle space
need not be the same as an interpoloation on visual C-space,
but we assume the difference would be fairly small for a reasonable
sampling density.  If all these images are
collision free, we treat $(u, v)$ to be safe.
The performance of
the local planners is evaluated relative
to this gold standard local planner.
Results reported here use $\epsilon = 1\degree$.



\begin{figure}[t]
\centering
    \begin{subfigure}[t]{0.48\columnwidth}%
        \includegraphics[width=\columnwidth]{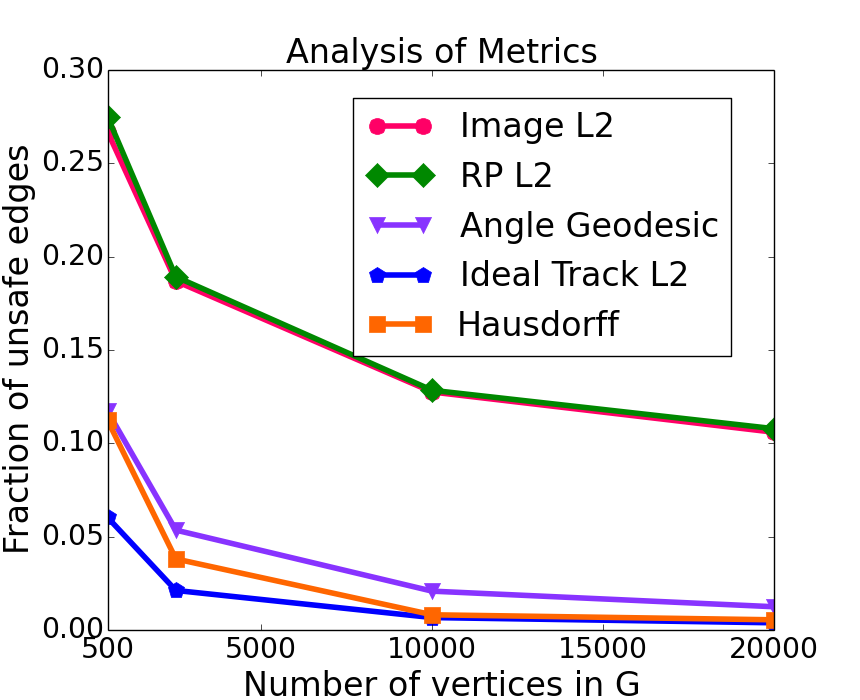}%
        \caption{}%
    \end{subfigure}%
    \begin{subfigure}[t]{0.48\columnwidth}%
        \includegraphics[width=\columnwidth]{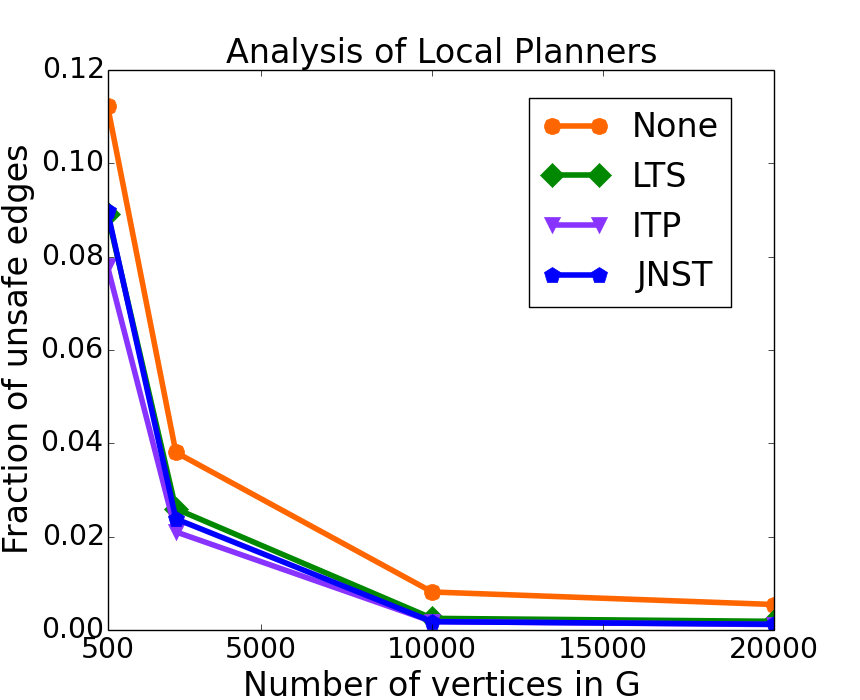}%
        \caption{}%
    \end{subfigure}%
    \caption{{\em Empirical analysis of metrics and local planners}.
a) Edge failures without a local planner.  b) Local planner performance plots
based on Hausdorff metric.   The JNST local planner performs almost as well
as the ITP local planner, which is not implementable in practice. 
}
\label{fig:lp_plots}
\end{figure}

\subsection{Effect of Sampling Density and Distance Metric}
\label{subsec:metrics}

Plots in figure~\ref{fig:lp_plots}a clearly suggest that the sampling density (i.e., the number of images used to construct the visual roadmap) heavily affects the fraction of unsafe edges and hence the quality of paths. The more dense the sample is, the better the paths.

We present the effect of several representations of the configuration space along with appropriate distance metric for each case. Table~\ref{tab:metrics} lists the different representations and the corresponding distance metric used to compute neighbourhoods.

\begin{table}[h]
    \centering
    \caption{
        Different representations of the configuration space and the distance metric used with each    
        representation. Short forms mentioned here have been used in table~\ref{tab:metrics_n_lp}.
    }
    \begin{tabular}{| l | l | l | l |}
    \hline
    Representation & Distance Metric & Short Form \\
    \hline
    1. Raw RGB images of the robot & $L_2$  &  Img $L_2$ \\
    2. Random projections of images & $L_2$ & RP $L_2$ \\
    3. Joint angle vector of the robot & Geodesic & $\theta$-G \\
    4. Ideal tracked points & $L_2$ & ITP $L_2$ \\
    5. Shi-Tomasi features link-wise & Hausdorff & ST-H\\
    \hline
    \end{tabular}
    \label{tab:metrics}
\end{table}

To find the distance between two images we just flatten all the channels of each image into a single vector and use the standard Euclidean  ($L_2$) distance on the resulting vectors. In our experiments we used 30,000 (100x100x3) dimensional vectors for image distance.

Random projections (RP~\cite{bingham2001random,dasgupta2000experiments}) is a dimensionality reduction method that preserves $L_2$ distances. In our experiments we projected the 30,000 dimensional image vectors onto 2000 Gaussian random unit vectors to obtain a 2000 dimensional representation of each image. The experiments show that the $L_2$ distance of RP vectors does almost as well as that on the image vectors. Since the distance computation is done on much smaller vectors, the graph construction gets much faster while preserving the neighbourhoods.

Distance between two joint angle vectors is computed as the sum of shortest circular-distance (i.e., treating 0 and $2\pi$ to be the same angle) between individual components. This is in some sense the geodesic distance between the two vectors.

The ideal tracked point (ITP) $L_2$ distance between two configurations is computed as the $L_2$ distance between the vectors obtained by concatenating all the tracked point coordinates of each configuration.

Finally, the Hausdorff distance between two configurations is computed as the sum of Hausdorff distances between the sets of Shi-Tomasi feature points on the corresponding links for the two configurations. Given two sets $A$ and $B$, Hausdorff distance is defined as 

$$d_H(A, B) = max\{ \sup_{a \in A} \inf_{b \in B} d(a, b), \sup_{b \in B} \inf_{a \in A} d(a, b) \}.$$


\begin{table}[h]
    \centering
    \caption{
        Percentage of bad edges remaining after pruning the VRM using each local planner on a graph with 20000 nodes with different metrics. See table~\ref{tab:metrics} for an explanation of these metric spaces.
    }
    \begin{tabular}{| c | l | l | l | l | l |}
    \hline
    \multirow{2}{*}{Local Planner} & \multicolumn{5}{c|}{Metric Space} \\
    \cline{2-6}
    & Img $L_2$ & RP $L_2$ & $\theta$-G & ITP $L_2$ & ST-H \\
    \hline
    None & 10.59 & 10.79 &  1.25 &  0.39 & 0.55 \\
    LTS & 9.18 &  9.34 &  0.43 &  0.09 & 0.19 \\
    ITP & 7.97 &  8.11 &  0.17 &  0.11 & 0.12 \\
    JNST & 9.58 &  9.74 &  0.16 &  0.12 & 0.12 \\
    \hline
    \end{tabular}
    \label{tab:metrics_n_lp}
\end{table}

As can be seen from figure~\ref{fig:lp_plots}b and table~\ref{tab:metrics_n_lp}, JNST local planner performs almost as well as ITP local planner.

\section {Demonstrations on Real Robots}
\label{sec:real-robots}
\begin{figure}[t]
\begin{centering}
\begin{minipage}{\columnwidth}
\includegraphics[width=0.30\columnwidth]{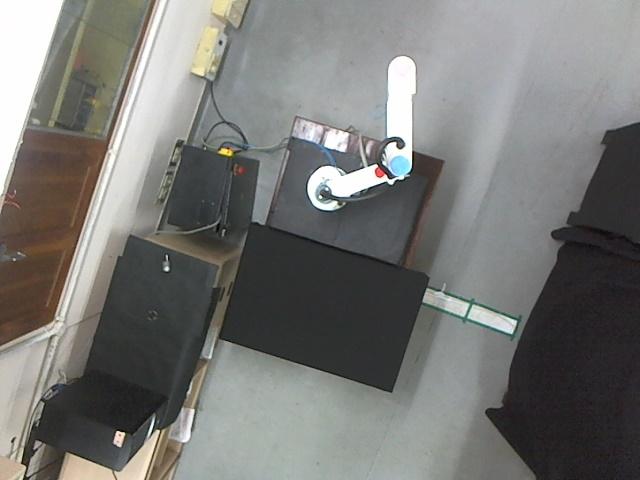}
\includegraphics[width=0.30\columnwidth]{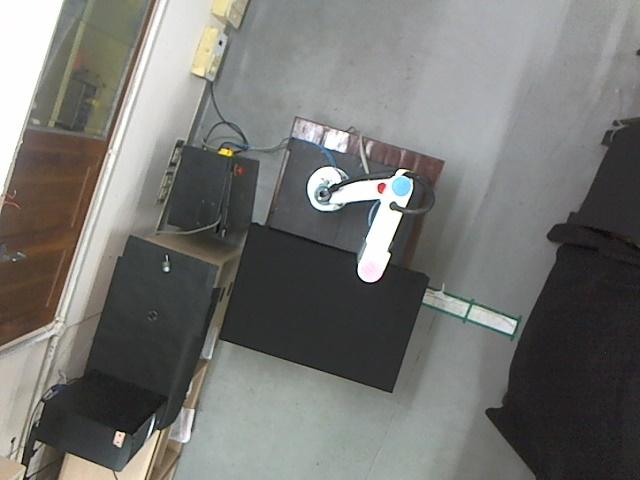}
\includegraphics[width=0.30\columnwidth]{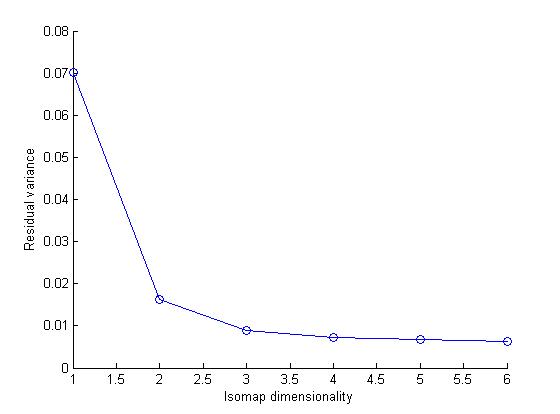}
\end{minipage}

\begin{minipage}{\columnwidth}
\includegraphics[width=0.30\columnwidth]{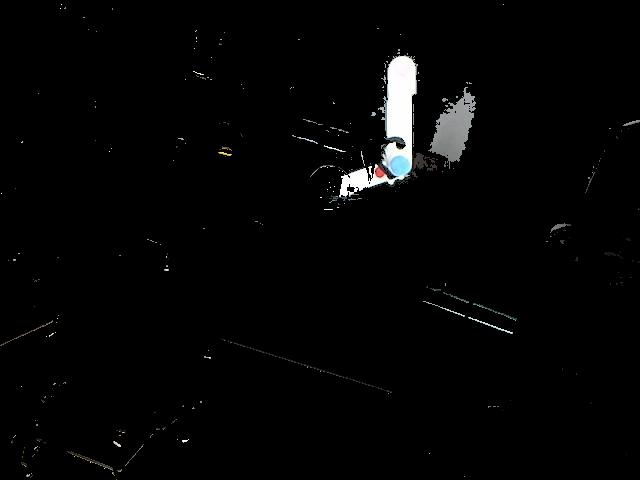}
\includegraphics[width=0.30\columnwidth]{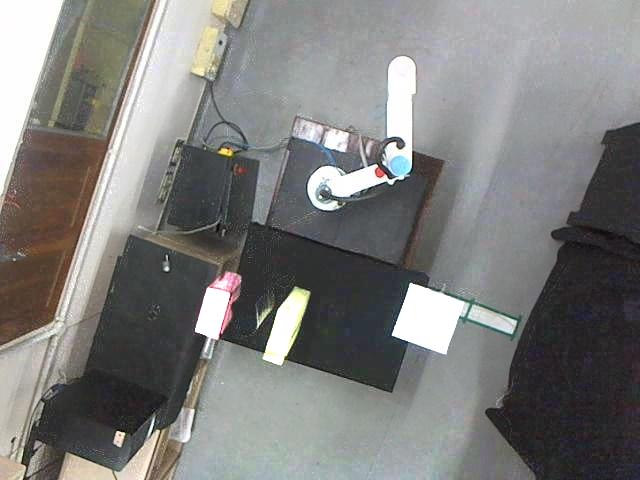}
\includegraphics[height=0.8in,width=0.30\columnwidth]{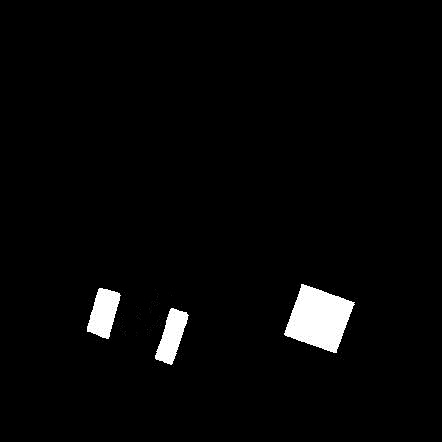}
\end{minipage}

\begin{minipage}{\columnwidth}
\includegraphics[width=.92\columnwidth]{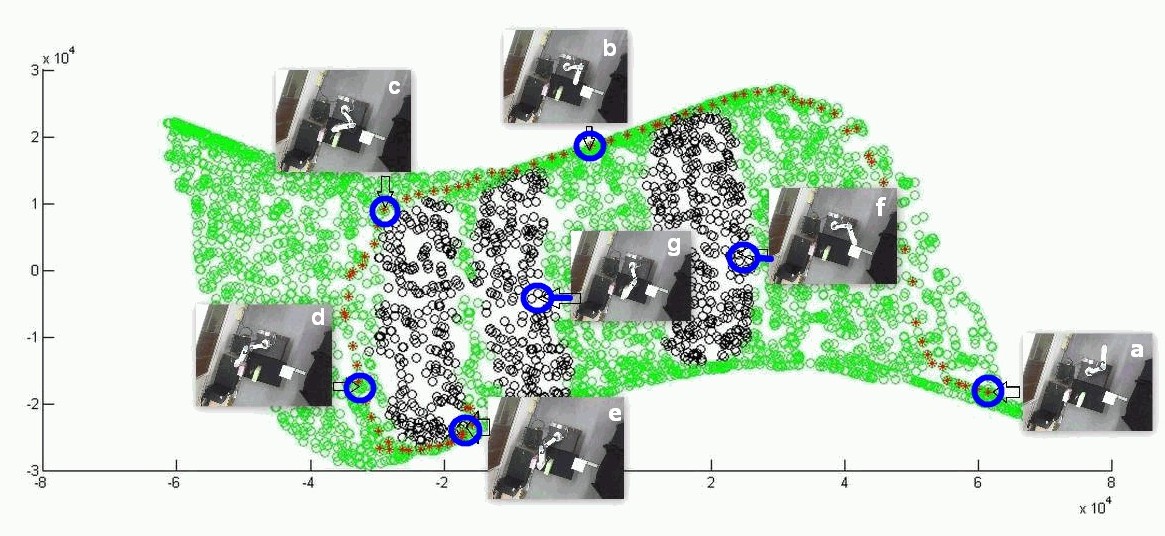}
\end{minipage}

\begin{minipage}{\columnwidth}
\includegraphics[width=0.30\columnwidth]{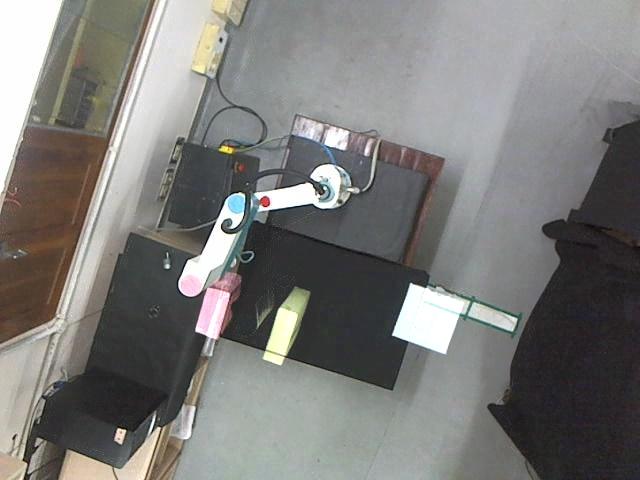}
\includegraphics[width=0.30\columnwidth]{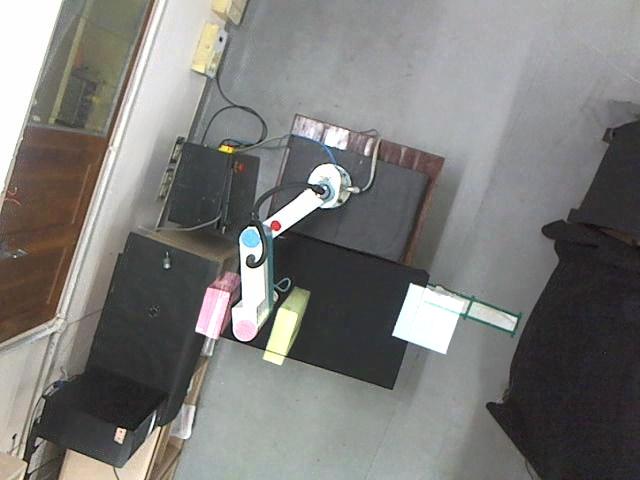}
\includegraphics[width=0.30\columnwidth]{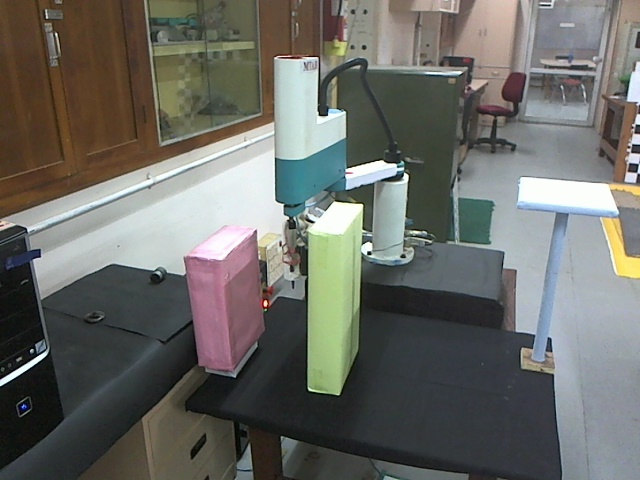}
\end{minipage}

\caption{{\em Path planning for the MTAB Scara robot arm.} Row 1: (a),(b)
 some of
  the 4000 images of the arm.  (c) scree plot. Row 2: incorporating
  obstacles. (a) background subtracted image of the arm, (b) image with
  obstacles. (c) obstacles after image subtraction. 
  Row 3: {\em Visual Configuration Space};  obstacle nodes shown in black, and
  showing a path plotted from start to goal images. Row 4: path being
  executed by Scara. }
\label{fig:scara}
\end{centering}
\end{figure}

\begin{figure}[h]
\begin{center}
\includegraphics[width=\columnwidth]{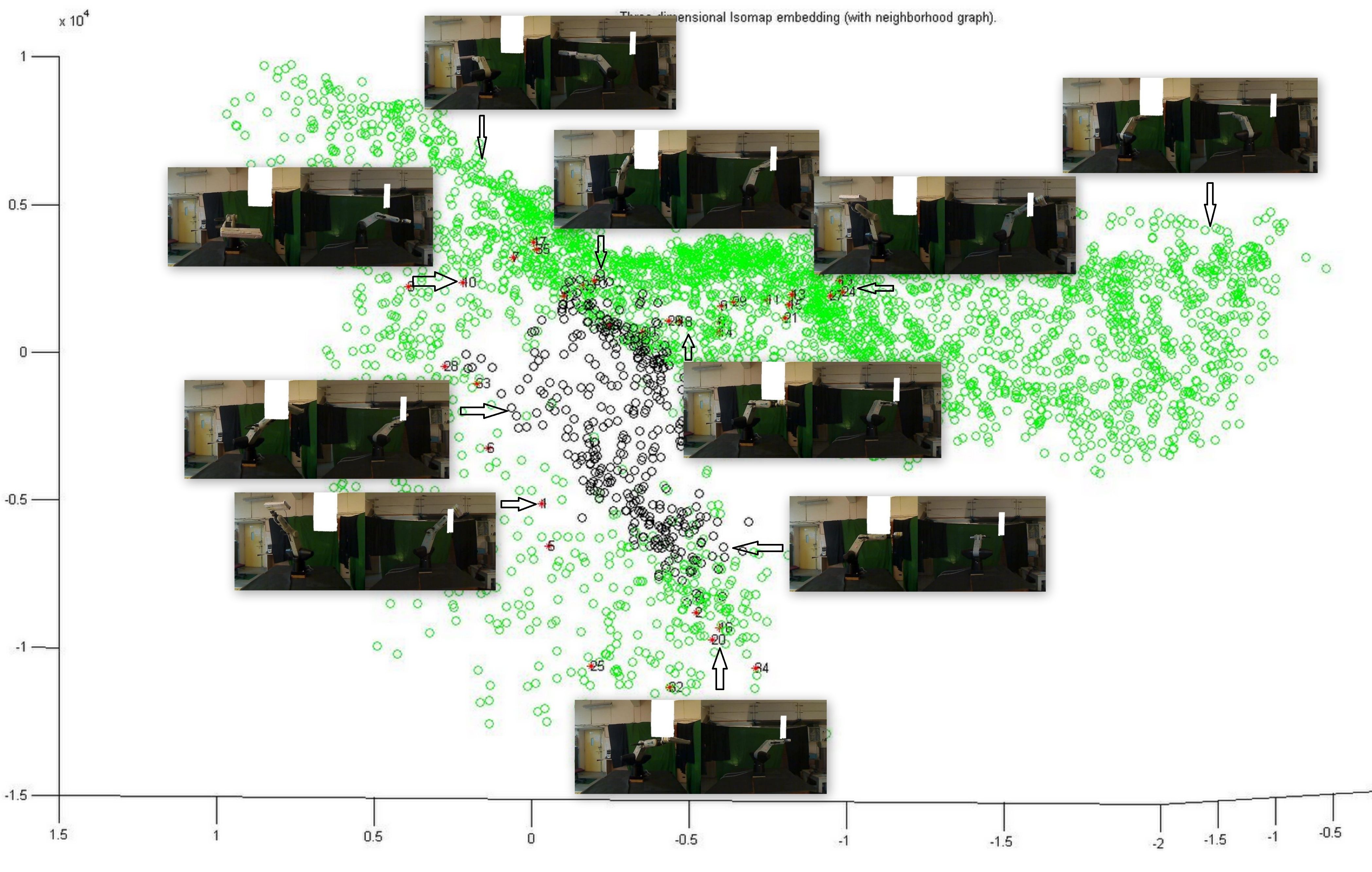}
\caption{{\em Two-camera joint-manifold VRM for a 6-DOF CRS A465 robot.} 
Since this is a 3-D workspace, obstacles cannot be distinguished from a single
view.  Here we use multiple cameras, and the intersection of the cones
provide a (oversized) model for both obstacle and robot.  
To identify potential collision states, 
background-substracted images of the 
obstacle (shown in white here) are overlaid on each foreground robot image
Only if the robot overlaps
the obstacle in all the images, it is a potential collision node.  Collision
nodes shown in black. 
} 
\label{fig:CRS-joint-manifold}
\end{center}
\end{figure}

\subsection{Planar Scara robot}
\label{sec:scara}
We now demonstrate the algorithm for a real robot, a Scara 4 DOF
arm, in which two revolute joints move the first two links in a plane, so the
motion has two degrees of freedom. 
We observe this robot with an overhead camera.  
4000 images are sampled from a video while the robot is moving between random
poses throughout its workspace, and the neighbourhood graph is computed.  
Thereafter, several obstacles are introduced in the workspace 
and the obstacles are discovered via background subtraction.  Note that owing
to the motion being planar, a single camera view is quite adequate. 
A planned path is shown in fig.~\ref{fig:scara}.

\subsection{CRS A465 robot arm}
\label{sec:crs}
Here we have a robot in a 3-D workspace.  Clearly, a single camera view will
not suffice for identifying collision situations.  Hence we construct a
joint manifold of multiple views by stitching the corresponding images
together, and constructing a joint manifold on the combined image space. 
The 3-DOF workspace and a path is shown in
fig.~\ref{fig:CRS-joint-manifold}, with the obstacle nodes marked in black.

\section{Conclusion}
\label{sec:conclusion}

In this work, we have introduced a new approach towards
the longstanding perceptual robotics problem, which 
subsumes the problem of body schema learning~\cite{
poincare-1895-space-and-geometry,hoffmann-marques-10_body-schema-in-robotics_review, philipona-oregan-2003_perception-of-sturcture-unknown-sensors}. 
Although it has been long known that there may be many kinds of generalized
coordinates, so far there have been few attempts in robotics to build on this
intuition.  The proposed paradigm attempts to develop such a non-traditional GC,
and approximates the C-space that results from it in terms of a
neighbourhood graph on the set of images.  We show how such a formulation
for tasks such as inverse kinematic or for motion planning. 

Unlike in methods used in robotics today, the
Visual Generalized Coordinates approach eliminates several expensive
aspects of robot modelling and planning.  First, it does not require a
humans to create models for robot geometry or kinematics.  It does not
require precise obstacle shapes and poses, and does not require to calibrate
the cameras so that this can be done.  There is no need
for a precise simulator to test which poses collide with obstacles and which
do not.  
Even the local planner step, based on tracking image points to
nearby images, results in a more principled approach than is available 
presently. 

Another advantage is for environments that are changing rapidly, 
e.g. in interaction with humans or other robots.  
New obstacles are updated in $O(n)$ time, but small motions
by another agent require $O(m)$, where there are $m$ nodes near
the obstacle boundary. 

The idea of generalized coordinates originated in Lagrangian
dynamics, and here is another direction that needs to be pursued. 
Differentiating the GC would result in generalized velocities
and accelerations and this may give rise to a visual dynamics. 

However, there are some significant trade-offs.  First, the approach is not
complete because the obstacle approximations is conservative, and there may
exist paths which it cannot find.  
We observe that humans also face similar
constraints where the vision is less informative.  Secondly, it is applicable
to situations where the entire C-Space is visible.    
While the algorithms are reasonably efficient in time complexity, the space
costs are higher ($O(np)$), since all landmark images need to be stored. 
Another constraint is the Visual Distinguishability assumption, but this may
not be very serious in practice.

The approach presented is only a beginning for discovering generalized
coordinates from sensorimotor data. One of the key future steps would be
to fuse modalities other than vision into a joint manifold.  Thus, if we 
were to construct a fused visuo-motor manifold, then
even if poses that are
separated in motion space look similar, they would remain
distinguishable.  Similarly touch stimuli could be modelled to predict
the result of motions or in preparation for fine-motor tasks. 
Such a process would also make the model more robust against noise arising in
any single modality. 
On the whole, while the ideas presented seem promising, and open up many
possibilities, much  work remains
to deploy Visual Generalized Coordinates fully in theory and in practice.

\bibliographystyle{IEEEtran}
\bibliography{icra16}

\end{document}